\documentclass[11pt,a4paper]{article}
\usepackage[hyperref]{aacl-ijcnlp2020}
\usepackage{times}
\usepackage{latexsym}
\usepackage{amsmath}
\usepackage{amsthm}
\usepackage{amsfonts}
\usepackage{mathtools}
\usepackage{multirow}
\usepackage{subcaption,booktabs}

\usepackage{microtype}
\aclfinalcopy

\newtheorem{thm}{Theorem}

\usepackage[fixed]{fontawesome5}
\newcommand{\myparagraph}[1]{\noindent\textbf{#1} }

\newcommand{\CATFr}{C\tiny AT\faLock}
\newcommand{\CATEE}{C\tiny AT\faLockOpen}
\newcommand{\CCAFr}{C\tiny CA\faLock}
\newcommand{\KCCAFr}{K\tiny CCA\faLock}

\newcommand{\CNNR}{C\tiny NN-R}
\newcommand{\CNNS}{C\tiny NN-S}
\newcommand{\CNNNS}{C\tiny NN-NS}

\setlength{\textfloatsep}{5pt}

\title{Beyond Fine-tuning: Few-Sample Sentence Embedding Transfer}

\author{Siddhant Garg\thanks{\ \ Equal contribution by authors} \ \thanks{\ \ Work completed at the University of Wisconsin-Madison}\\
Amazon Alexa AI Search\\
Manhattan Beach, CA, USA\\
 \texttt{sidgarg@amazon.com} \\
\And Rohit Kumar Sharma\footnotemark[1] \ \footnotemark[2] \\
 Microsoft \\
 Seattle, WA, USA\\
 \texttt{rsharma@cs.wisc.edu} \\
\And Yingyu Liang \\
 University of Wisconsin-Madison\\
 Madison, WI, USA\\
 \texttt{yliang@cs.wisc.edu} \\
}

\begin{document}
\maketitle
\begin{abstract}
Fine-tuning (FT) pre-trained sentence embedding models on small datasets has been shown to have limitations. In this paper we show that concatenating the embeddings from the pre-trained model with those from a simple sentence embedding model trained only on the target data, can improve over the performance of FT for few-sample tasks. To this end, a linear classifier is trained on the combined embeddings, either by freezing the embedding model weights or training the classifier and embedding models end-to-end. We perform evaluation on seven small datasets from NLP tasks and show that our approach with end-to-end training outperforms FT with negligible computational overhead. Further, we also show that sophisticated combination techniques like CCA and KCCA do not work as well in practice as concatenation. We provide theoretical analysis to explain this empirical observation.
\end{abstract}

\section{Introduction}
Fine-tuning (FT) powerful pre-trained sentence embedding models like BERT~\cite{devlin2018bert} has recently become the de-facto standard for downstream NLP tasks. 
Typically, FT entails jointly learning a classifier over the pre-trained model while tuning the weights of the latter. 
While FT has been shown to improve performance on tasks like GLUE~\cite{wang-etal-2018-glue} having large datasets (QQP, MNLI, QNLI), similar trends have not been observed on small datasets, where one would expect the maximum benefits of using a pre-trained model.
Several works~\cite{phang2018sentence,garg2019tanda,dodge2020finetuning,Lee2020Mixout} have demonstrated that FT with a few target domain samples is unstable with high variance, thereby often leading to sub-par gains.
Furthermore, this issue has also been well documented in practice~\footnote{Issues numbered \emph{265, 1211} on \emph{https://github.com/huggi} \emph{ngface/transformers/issues/}}. 

Learning with low resources has recently become an active research area in NLP, and arguably one of the  most interesting scenarios for which pre-trained models are useful (e.g.,~\cite{emnlp-2019-deep}). Many practical applications have small datasets (e.g., in social science, medical studies, etc), which are different from large-scale academic benchmarks having hundreds of thousands of training samples (e.g, DBpedia~\cite{dbpedia-swj}, Sogou News~\cite{10.1145/1367497.1367560}, etc). 
This necessitates effective transfer learning approaches using pre-trained sentence embedding models for few-sample tasks.

In this work, we show that concatenating sentence embeddings from a pre-trained model and those from a smaller model trained solely on the target data, can improve over the performance of FT.
Specifically, we first learn a simple sentence embedding model on the target data. Then we concatenate(\textsc{C\tiny AT}) the embeddings from this model with those from a pre-trained model, and train a linear classifier on the combined representation. 
The latter can be done by either freezing the embedding model weights or training the whole network (classifier plus the two embedding models) end-to-end.

We evaluate our approach on seven small datasets from NLP tasks. Our results show that our approach with end-to-end training can significantly improve the prediction performance of FT, with less than a $10\%$ increase in the run time. Furthermore, our approach with frozen embedding models performs better than FT for very small datasets while reducing the run time by $30\%{-}50\%$, and without the requirement of large memory GPUs. 

We also conduct evaluations of multiple techniques for combining the pre-trained and domain-specific embeddings, comparing concatenation to CCA and KCCA. We observe that the simplest approach of concatenation works best in practice. 
Moreover, we provide theoretical analysis to explain this empirical observation.

Finally, our results also have implications on the semantics learning ability of small domain-specific models compared to large pre-trained models.
While intuition dictates that a large pre-trained model should capture the entire semantics learned by a small domain-specific model, our results show that there exist semantic features captured solely by the latter and not by the former, in spite of pre-training on billions of words. Hence combining the embeddings can improve the performance of directly FT the pre-trained model.

\myparagraph{Related Work} 
Recently, several pre-trained models have been studied, of which some provide explicit sentence embeddings~\cite{conneau2017supervised,subramanian2018learning}, while others provide implicit ones~\cite{howard2018universal,radford2018improving}.
\citet{peters2019tune} compare the performance of feature extraction (by freezing the pre-trained weights) and FT.
There exists other more sophisticated transferring methods, but they are typically much more expensive or complicated. For example, \citet{xu2019bert} ``post-train" the pre-trained model on the target dataset, \citet{houlsby2019parameter} inject specifically designed new adapter layers, \citet{arase-tsujii-2019-transfer} inject phrasal paraphrase relations into BERT, \citet{DBLP:journals/corr/abs-1905-05583} use multi-task FT, and \citet{wang2019to} first train a deep network classifier on the fixed pre-trained embedding and then fine-tune it. Our focus is to propose alternatives to FT with similar simplicity and computational efficiency, and study conditions where it has significant advantages. 
While the idea of concatenating multiple embeddings has been previously used~\cite{peters2018deep}, we use it for transfer learning in a low resource target domain. 

\section{Methodology}
\label{sec:method}
We are given a set of labeled training sentences $\mathcal{S} {=} \{(s_i, y_i)\}_{i=1}^m$ from a target domain and a pre-trained 
sentence embedding model $f_1$.
Denote the embedding of $s$ from $f_1$ by $v_{1s} {=} f_1(s) {\in} \mathbb{R}^{d_1}$. 
Here $f_1$ is assumed to be a large and powerful embedding model such as BERT.
Our goal is to transfer $f_1$ effectively to the target domain using $\mathcal{S}$. 
We propose to use a second sentence embedding model $f_2$, which is different from and typically much smaller than $f_1$, which has been trained solely on $\mathcal{S}$.
The small size of $f_2$ is necessary for efficient learning on the small target dataset.
Let $v_{2s} {=} f_2(s) {\in} \mathbb{R}^{d_2}$ denote the embedding for $s$ obtained from $f_2$.

Our method \textsc{C\tiny AT} concatenates $v_{1s}$ and $v_{2s}$ to get an adaptive representation $\bar{v}_s {=} [v_{1s}^\top, \alpha v_{2s}^\top]^\top$ for $s$. Here $\alpha {>} 0$ is a hyper-parameter to modify emphasis on $v_{1s}$ and $v_{2s}$. It then trains a linear classifier $c(\bar{v}_s)$ using $\mathcal{S}$ in the following two ways:

\myparagraph{(a) Frozen Embedding Models {\small \faLock}} Only training the classifier $c$ while fixing the weights of embedding models $f_1$ and $f_2$. This approach is computationally cheaper than FT $f_1$ since only $c$ is trained. We denote this by {\CATFr} (Locked $f_1,f_2$ weights).

\myparagraph{(b) Trainable Embedding Models {\small \faLockOpen}}  Jointly training classifier $c$, and embedding models $f_1 , f_2$ in an end-to-end fashion. We refer to this as {\CATEE}.

The inspiration for combining embeddings from two different models $f_1, f_2$ stems from the impressive empirical gains of ensembling~\cite{10.5555/648054.743935} in machine learning. While typical ensembling techniques like bagging and boosting aggregate predictions from individual models, {\CATFr} and {\CATEE} aggregate the embeddings from individual models and train a classifier using $\mathcal{S}$ to get the predictions.
Note that {\CATFr} keeps the model weights of $f_1,f_2$ frozen, while {\CATEE} initializes the weights of $f_2$ after initially training on $\mathcal{S}$ \footnote{We empirically observe that {\CATEE} by randomly initializing weights of $f_2$ performs similar to fine-tuning only $f_1$}. 

One of the benefits of {\CATFr} and {\CATEE} is that they treat $f_1$ as a black box and do not access its internal architecture like other variants of FT~\cite{houlsby2019parameter}.
Additionally, we can theoretically guarantee that the concatenated embedding will generalize well to the target domain under assumptions on the loss function and embedding models. 

\subsection{Theoretical Analysis} 
Assume there exists a ``ground-truth" embedding vector $v^*_s$ for each sentence $s$ with label $y_s$, and a ``ground-truth" linear classifier $f^*(s) {=} \langle w^*, v_s^* \rangle$ with a small loss $L(f^*) {=} \mathbb{E}_s [\ell(f^*(s), y_s)]$ w.r.t.\ some loss function $\ell$ (such as cross-entropy), where $\mathbb{E}_s$ denotes the expectation over the true data distribution. 
The superior performance of {\CATEE} in practice (see Section~\ref{sec:experiments}) suggests that there exists a linear relationship between the embeddings $v_{1s}, v_{2s}$ and $v^*_s$. Thus we assume a theoretical model: $v_{1s} = P_1 v^*_s + \epsilon_1$ ; $v_{2s} = P_2 v^*_s + \epsilon_2$
where $\epsilon_i$'s are noises independent of $v^*_s$ with variances $\sigma^2_i$'s. If we denote $P^\top {=} [P_1^\top, P_2^\top]$ and $\epsilon^\top {=} [\epsilon_1^\top, \epsilon_2^\top]$, then the concatenation $\bar{v}_s {=} [v_{1s}^\top, v_{2s}^\top]^\top$ is  $\bar{v}_s {=} Pv^*_s + \epsilon$. Let $\sigma {=} \sqrt{\sigma_1^2 + \sigma^2_2}$. We present the following theorem which guarantees the existence of a ``good" classifier $\bar{f}$ over $\bar{v}_s$:

\begin{thm}
If the loss function $L$ is $\lambda$-Lipschitz for the first parameter, and $P$ has full column rank, then there exists a linear classifier $\bar{f}$ over $\bar{v}_s$ such that 
$ L(\bar{f}) \le L(f^*) + \lambda \sigma \| (P^\dagger)^\top w^* \|_2  $
where $P^\dagger$ is the pseudo-inverse of $P$. 
\end{thm}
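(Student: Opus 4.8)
The plan is to construct an explicit linear classifier $\bar{f}$ over $\bar{v}_s$ by ``decoding'' the ground-truth embedding from the concatenation via the pseudo-inverse, and then to control the excess loss $L(\bar{f}) - L(f^*)$ purely through the noise term using the Lipschitz assumption. The argument is essentially deterministic in structure; randomness enters only in the final expectation bound.

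First I would exploit the full-column-rank hypothesis on $P$, which guarantees that $P^\dagger$ is a genuine left inverse, i.e.\ $P^\dagger P = I$. Applying $P^\dagger$ to the theoretical model $\bar{v}_s = P v^*_s + \epsilon$ then yields $P^\dagger \bar{v}_s = v^*_s + P^\dagger \epsilon$, so the concatenated embedding recovers $v^*_s$ up to the projected noise $P^\dagger \epsilon$. This is the crux, as it identifies the linear map on $\bar{v}_s$ that reproduces $v^*_s$.

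Next I would define the candidate classifier through the weight vector $\bar{w} = (P^\dagger)^\top w^*$, so that $\bar{f}(s) = \langle \bar{w}, \bar{v}_s \rangle = \langle w^*, P^\dagger \bar{v}_s \rangle$. Substituting the identity from the previous step gives $\bar{f}(s) = \langle w^*, v^*_s \rangle + \langle w^*, P^\dagger \epsilon \rangle = f^*(s) + \langle \bar{w}, \epsilon \rangle$, after moving $P^\dagger$ across the inner product. Hence $\bar{f}$ agrees with $f^*$ exactly, with an additive error equal to the noise $\epsilon$ read through $\bar{w}$.

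Finally I would convert this pointwise closeness into the loss bound. By the $\lambda$-Lipschitz property of $\ell$ in its first argument, $\ell(\bar{f}(s), y_s) \le \ell(f^*(s), y_s) + \lambda |\langle \bar{w}, \epsilon \rangle|$; taking $\mathbb{E}_s$ over the data distribution (and the noise) gives $L(\bar{f}) \le L(f^*) + \lambda \, \mathbb{E}[|\langle \bar{w}, \epsilon \rangle|]$. The remaining work is to bound this expectation: by Cauchy--Schwarz $|\langle \bar{w}, \epsilon \rangle| \le \|\bar{w}\|_2 \|\epsilon\|_2$, and by Jensen $\mathbb{E}[\|\epsilon\|_2] \le \sqrt{\mathbb{E}[\|\epsilon\|_2^2]} = \sqrt{\sigma_1^2 + \sigma_2^2} = \sigma$, using $\epsilon = [\epsilon_1^\top, \epsilon_2^\top]^\top$ together with the variance assumption on each $\epsilon_i$; substituting $\|\bar{w}\|_2 = \|(P^\dagger)^\top w^*\|_2$ yields the claim. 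The step I expect to be most delicate is this noise estimate: one must fix the precise meaning of ``variance $\sigma_i^2$'' (here as $\mathbb{E}\|\epsilon_i\|_2^2$) so that the pieces combine to give exactly $\sigma = \sqrt{\sigma_1^2 + \sigma_2^2}$. I also note that independence of $\epsilon$ from $v^*_s$ is not actually needed for this particular bound, since the error is controlled pointwise; only a second-moment bound on $\epsilon$ is used.
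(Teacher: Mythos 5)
Your proposal is correct and follows essentially the same argument as the paper: the same candidate weight vector $\bar{w} = (P^\dagger)^\top w^*$, the identity $\bar{f}(s) = f^*(s) + \langle \bar{w}, \epsilon\rangle$ via $P^\dagger P = I$, Lipschitzness to reduce to $\lambda\,\mathbb{E}|\langle\bar{w},\epsilon\rangle|$, and then Cauchy--Schwarz plus Jensen to obtain $\lambda\sigma\|(P^\dagger)^\top w^*\|_2$. The only (immaterial) difference is that you apply Cauchy--Schwarz pointwise and then Jensen to $\mathbb{E}\|\epsilon\|_2$, whereas the paper applies Jensen first and Cauchy--Schwarz inside the square root; your closing remarks on the meaning of $\sigma_i^2$ and the non-necessity of independence are also accurate.
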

\begin{proof}
Let $\bar{f}$ have weight $\bar{w} = (P^\dagger)^\top w^*$. Then
\vspace{-0.5em}
\begin{align}
    \langle \bar{w}, \bar{v}_s \rangle 
    & = \langle (P^\dagger)^\top w^*, Pv^*_s + \epsilon \rangle \notag
    \\
    & = \langle (P^\dagger)^\top w^*,  P v^*_s \rangle + \langle (P^\dagger)^\top w^*,  \epsilon \rangle \notag \\
    & = \langle w^*, P^\dagger P v^*_s \rangle + \langle (P^\dagger)^\top w^*,  \epsilon \rangle \notag \\
    & = \langle w^*, v^*_s \rangle  + \langle (P^\dagger)^\top w^*,  \epsilon \rangle. 
\end{align}
Then the difference in the losses is given by
\vspace{-0.7em}
\begin{align}
    L(\bar{f}) - L(f^*) 
    & = \mathbb{E}_s [\ell(\bar{f}(s), y_s) - \ell(f^*(s), y_s)] \notag
    \\
    & \le  \lambda \mathbb{E}_s |\bar{f}(s) - f^*(s)|
    \label{eqn:2}
    \\
    & =   \lambda \mathbb{E}_s |\langle (P^\dagger)^\top w^*,  \epsilon \rangle|. \notag
    \\
    & \le  \lambda  \sqrt{\mathbb{E}_s \langle (P^\dagger)^\top  w^*, \epsilon \rangle^2} 
    \label{eqn:3}
    \\
    & \le \lambda  \sqrt{\mathbb{E}_s \|(P^\dagger)^\top w^*\|_2^2 \| \epsilon \|_2^2}
    \label{eqn:4}
    \\
    & = \lambda \sigma \| (P^\dagger)^\top w^* \|_2 \notag
\end{align}
where we use the Lipschitz-ness of $L$ in Equation~\ref{eqn:2}, Jensen's inequality in Equation~\ref{eqn:3}, and Cauchy-Schwarz inequality in Equation~\ref{eqn:4}.
\end{proof}

More intuitively, if the SVD of $P{=}U \Sigma V^\top$, then $\| (P^\dagger)^\top w^* \|_2 {=} \| (\Sigma^\dagger)^\top {V^\top} w^* \|_2$. So if the top right singular vectors in $V$ align well with $w^*$, then $\| (P^\dagger)^\top w^* \|_2$ will be small in magnitude. This means that if $P_1$ and $P_2$ together cover the direction $w^*$, they can capture information important for classification. And thus there exists a good classifier $\bar{f}$ on $\bar{v}_s$. Additional explanation is presented in Appendix~\ref{app:proofs-1}.

\subsection{Do Other Combination Methods Work?}
There are several sophisticated techniques to combine $v_{1s}$ and $v_{2s}$ other than concatenation.
Since $v_{1s}$ and $v_{2s}$ may be in different dimensions, a dimension reduction technique which projects them on the same dimensional space might work better at capturing the general and domain specific information. We consider two popular techniques:

\vspace{0.4em}
\myparagraph{CCA} Canonical Correlation Analysis~\cite{hotelling1936relations} learns linear projections $\Phi_1$ and $\Phi_2$ into dimension $d$ to maximize the correlations between the projections $\{\Phi_1 v_{1s_i}\}$ and $\{\Phi_2 v_{2s_i}\}$. We use $\bar{v}_s^\top = \frac{1}{2} \Phi_1 v_{1s_i} + \frac{1}{2} \Phi_2 v_{2s_i}$ with $d  = \min\{d_1, d_2\}$.

\vspace{0.4em}
\myparagraph{KCCA} Kernel Canonical Correlation Analysis~\cite{scholkopf1998nonlinear} first applies nonlinear projections $g_1$ and $g_2$ and then CCA on $\{g_1(v_{1s_i})\}_{i=1}^m$ and $\{g_2(v_{2s_i})\}_{i=1}^m$.
We use $d  = \min\{d_1, d_2\}$ and $\bar{v}_s^\top = \frac{1}{2} g_1(v_{1s_i}) + \frac{1}{2} g_2(v_{2s_i})$.

\vspace{0.4em}
We empirically evaluate {\CCAFr} and {\KCCAFr} and our results (see Section~\ref{sec:experiments}) show that the former two perform worse than {\CATFr}. Further, {\CCAFr} performs even worse than the individual embedding models. This is a very interesting negative observation, and below we provide an explanation for this.

\vspace{0.4em}
We argue that even when $v_{1s}$ and $v_{2s}$ contain information important for classification, CCA of the two embeddings can eliminate this and just retain the noise in the embeddings, thereby leading to inferior prediction performance. Theorem~\ref{thm:cca} constructs such an example.

\begin{thm}
\label{thm:cca}
Let $\bar{v}_s$ denote the embedding for sentence $s$ obtained by concatenation, and $\tilde{v}_s$ denote that obtained by CCA.
There exists a setting of the data and $w^*, P, \epsilon$ such that there exists a linear classifier $\bar{f}$ on $\bar{v}_s$ with the same loss as $f^*$, while CCA achieves the maximum correlation but any classifier on $\tilde{v}_s$ is at best random guessing.
\end{thm}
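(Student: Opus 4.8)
The plan is to exhibit an explicit, low-dimensional instance of the model $v_{1s} = P_1 v^*_s + \epsilon_1$, $v_{2s} = P_2 v^*_s + \epsilon_2$ in which a single noise source is \emph{shared} across the two views and \emph{dominates} the signal, arranged so that the label-relevant signal appears with opposite signs in the two views. Concretely, I would take the ground-truth embedding to be one-dimensional, $v^*_s \in \mathbb{R}$ drawn symmetrically about $0$ with $w^* = 1$ and label $y_s = \mathrm{sign}(v^*_s)$, so that $f^*$ attains its minimal loss. I would introduce a shared noise $z_s$, independent of $v^*_s$, with variance $\sigma_z^2$ strictly larger than $\mathrm{Var}(v^*_s)$, and set $v_{1s} = v^*_s + z_s$ and $v_{2s} = -v^*_s + z_s$. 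In the notation of the theorem this corresponds to $P_1 = 1$, $P_2 = -1$, $\epsilon_1 = \epsilon_2 = z_s$, so that $P = (1, -1)^\top$ has full column rank.

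I would first dispatch the concatenation side. Since $\bar v_s = (v_{1s}, v_{2s})^\top$ and $\tfrac12(v_{1s} - v_{2s}) = v^*_s$, the linear classifier with weight $\bar w = (\tfrac12, -\tfrac12)^\top$ satisfies $\langle \bar w, \bar v_s \rangle = v^*_s = f^*(s)$ exactly, hence $L(\bar f) = L(f^*)$; this is also immediate from Theorem 1, since the noise is annihilated along the direction $\bar w$. The crux is the CCA side. Because both views are one-dimensional, $d = \min\{d_1, d_2\} = 1$ and CCA returns a single canonical pair. Here $\mathrm{Cov}(v_{1s}, v_{2s}) = \sigma_z^2 - \mathrm{Var}(v^*_s) > 0$ by the variance assumption, so the maximal canonical correlation is positive and CCA selects $\Phi_1, \Phi_2$ with the \emph{same} sign (the unit-variance normalization). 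The combined embedding is then $\tilde v_s = \tfrac12(\Phi_1 v_{1s} + \Phi_2 v_{2s}) \propto v_{1s} + v_{2s} = 2 z_s$: the opposite-sign signal cancels and only the shared noise survives. Since $z_s$ is independent of $v^*_s$ and therefore of $y_s$, the variate $\tilde v_s$ is independent of the label, so every classifier on $\tilde v_s$ can only match the base rate, i.e.\ random guessing under the balanced label distribution.

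I expect the main obstacle to be pinning down the sign and normalization behaviour of CCA so that the retained canonical variate is genuinely pure noise. The whole construction hinges on the inequality $\sigma_z^2 > \mathrm{Var}(v^*_s)$: it guarantees $\mathrm{Cov}(v_{1s}, v_{2s}) > 0$, which forces CCA to align the two projections with equal signs and thereby cancel the signal in the average. Were the signal to dominate instead, CCA would flip one sign, and the very same averaging would recover $v^*_s$ and destroy the example; so I would state this domination condition explicitly and verify, from the $2 \times 2$ covariance structure, that it yields exactly this canonical solution. The remaining steps --- confirming $\tilde v_s$ is a deterministic function of $z_s$ alone, hence independent of $y_s$, and translating ``function of noise alone'' into ``at best random guessing'' for a symmetric binary label --- are routine once the label distribution is fixed to be balanced.
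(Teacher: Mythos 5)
Your construction is correct, but it takes a genuinely different route from the paper's. The paper sets $\epsilon = 0$ and puts the ``shared but label-irrelevant'' content inside $v^*_s$ itself: $v^*_s$ has $d+2$ independent Gaussian coordinates, only the first two matter for the label ($w^* = [1,1,0,\dots,0]^\top$), and $P_1, P_2$ are diagonal selectors so that each view captures exactly one of the two relevant coordinates plus all $d$ irrelevant ones. The two views are then perfectly correlated (correlation $1$) on the irrelevant coordinates and uncorrelated on the relevant ones, so CCA provably keeps only the irrelevant block and $\tilde{v}_s$ is independent of the label. Your example instead makes the shared component a \emph{correlated noise} $z_s$ dominating an anti-correlated signal, so that CCA's sign-aligned projections cancel $v^*_s$ in the average $\tfrac12(\Phi_1 v_{1s} + \Phi_2 v_{2s})$. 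Both mechanisms are legitimate instantiations of the theorem, and yours is lower-dimensional with a fully explicit $2\times 2$ covariance computation. Two caveats are worth flagging. First, your argument leans essentially on the paper's convention that the CCA embedding is the \emph{average} of the two projections: the single projected view $\Phi_1 v_{1s} \propto v^*_s + z_s$ still carries signal and would beat random guessing, so if the combined embedding were instead the pair $[\Phi_1 v_{1s}, \Phi_2 v_{2s}]$ your example would break, whereas the paper's example is robust to that choice (both projections there are identical and label-independent). Second, in the paper's construction the canonical correlations equal $1$, so ``CCA achieves the maximum correlation'' holds in the literal sense; in yours the optimum is $(\sigma_z^2 - \mathrm{Var}(v^*_s))/(\sigma_z^2 + \mathrm{Var}(v^*_s)) < 1$, which satisfies the statement only under the (reasonable) reading that CCA optimally solves its own objective. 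You correctly identify the sign/normalization analysis and the domination condition $\sigma_z^2 > \mathrm{Var}(v^*_s)$ as the crux; with those verified as you outline, the proof goes through.
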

\begin{proof}
Suppose we perform CCA to get $d$ dimensional $\tilde{v}_s$. Suppose $v^*_s$ has ${d+2}$ dimensions, each dimension being an independent Gaussian. Suppose  $w^*{=}[1,1,0,\dots, 0]^\top$, and the label for the sentence $s$ is $y_s {=} 1$ if $\langle w^*, v^*_s\rangle {\ge} 0$ and $y_s {=} 0$ otherwise. Suppose $\epsilon {=} 0$,
$P_1 {=} \text{diag}(1, 0, 1,\dots, 1)$, and $P_2 {=} \text{diag}(0, 1, 1,\dots, 1)$.

\vspace{0.5em}

\noindent Let the linear classifier $\bar{f}$ have weights $[1, 0, \mathbf{0}, 0, 1, \mathbf{0}]^\top$ where $\mathbf{0}$ is the zero vector of $d$ dimensions. Clearly, $\bar{f}(s) {=} f^*(s)$ for any $s$, so it has the same loss as $f^*$.

\vspace{0.5em}

\noindent For CCA, since the coordinates of $v_s^*$ are independent Gaussians, $v_{1s}$ and $v_{2s}$ only have correlation in the last $d$ dimensions. Solving the CCA optimization, the projection matrices for both embeddings are the same $\phi = \text{diag}(0, 0, 1, \dots, 1)$ which achieves the maximum correlation. Then the CCA embedding is $\tilde{v}_s = [0, 0, (v^*_s)_{3:(d+2)}]$ where $(v^*_s)_{3:(d+2)}$ are the last $d$ dimensions of $v^*_s$, which contains no information about the label. Therefore, any classifier on $\tilde{v}_s$ is at best random guessing.
\end{proof}

\noindent The intuition for this is that $v_{1s}$ and $v_{2s}$ share common information while each has some special information for the classification. If the two sets of special information are uncorrelated, then they will be eliminated by CCA. Now, if the common information is irrelevant to the labels, then the best any classifier can do with the CCA embeddings is just random guessing. This is a fundamental drawback of the unsupervised CCA technique, clearly demonstrated by the extreme example in the theorem. In practice, the common information can contain some relevant information, so CCA embeddings are worse than concatenation but better than random guessing. KCCA can be viewed as CCA on a nonlinear transformation of $v_{1s}$ and $v_{2s}$ where the special information gets mixed non-linearly and cannot be separated out and eliminated by CCA. This explains why the poor performance of {\CCAFr} is not observed for {\KCCAFr} in Table~\ref{tab:small_datasets}. We present additional empirical verification of Theorem~\ref{thm:cca} in Appendix~\ref{app:proofs-2}.

\section{Experiments}
\label{sec:experiments}
\myparagraph{Datasets}
We evaluate our approach on seven low resource datasets from NLP text classification tasks like sentiment classification, question type classification, opinion polarity detection, subjectivity classification, etc. 
We group these datasets into 2 categories: the first having a few hundred training samples (which we term as very small datasets for the remainder of the paper), and the second having a few thousand training samples (which we term as small datasets). 
We consider the following 3 very small datasets: Amazon (product reviews), IMDB (movie reviews) and Yelp (food article reviews); and the following 4 small datasets: MR (movie reviews), MPQA (opinion polarity), TREC (question-type classification) and SUBJ (subjectivity classification).
We present the statistics of the datasets in Table~\ref{tab:datasets} and provide the details and downloadable links in Appendix~\ref{app:datasets}.

\begin{table}[h]
\centering
\small
\resizebox{\columnwidth}{!}{
\begin{tabular}{ccccc}
\toprule
\multicolumn{1}{c}{\textbf{Dataset}} & \multicolumn{1}{c}{\textbf{c}} & \multicolumn{1}{c}{\textbf{N}} & \multicolumn{1}{c}{\textbf{$|$V$|$}} & \multicolumn{1}{c}{\textbf{Test}} \\ \midrule
Amazon~\cite{sarma2018domain}       & 2                               & 1000        & 1865 & 100   \\ 
IMDB ~\cite{sarma2018domain}       & 2                               & 1000     & 3075  & 100    \\ 
Yelp ~\cite{sarma2018domain}      & 2                               & 1000  & 2049   & 100  \\ 
MR \cite{Pang+Lee:05a}        & 2                               & 10662    & 18765  & 1067  \\ 
MPQA \cite{Wiebe2005}        & 2                               & 10606    & 6246 & 1060        \\ 
TREC \cite{Li:2002:LQC:1072228.1072378} & 6                               & 5952         & 9592 & 500        \\ 
SUBJ \cite{Pang+Lee:04a}      & 2                               & 10000      & 21323      & 1000  \\ \bottomrule
\end{tabular}
}
\vspace{-0.7em}
\caption{Dataset statistics. \textit{c}: Number of classes, \textit{N}: Dataset size, \textit{$|V|$}: Vocabulary size, \textit{Test}: Test set size (if no standard test set is provided, we use a random train / dev / test split of 80 / 10 / 10 $\%$)}
\label{tab:datasets}
\end{table}

\myparagraph{Models for Evaluation}
We use the BERT~\cite{devlin2018bert} base uncased model as the pre-trained model $f_1$. We choose a Text-CNN~\cite{kim2014convolutional} model as the domain specific model $f_2$ with 3 approaches to initialize the word embeddings: randomly initialized ({\CNNR}), static GloVe~\cite{pennington-etal-2014-glove} vectors ({\CNNS}) and trainable GloVe vectors ({\CNNNS}).
We use a regularized logistic regression as the classifier $c$. We present the model and training details along with the chosen hyperparameters in Appendix~\ref{app:models}-\ref{app:training_details}.
We also present results with two other popular pre-trained models: GenSen and InferSent in Appendix~\ref{app:complete_results}. 

We consider two baselines: (i) BERT fine-tuning (denoted by BERT FT) and (ii) learning $c$ over frozen pre-trained BERT weights (denoted by BERT No-FT). We also present the Adapter~\cite{houlsby2019parameter} approach as a baseline, which injects new adapters in BERT followed by selectively training the adapters while freezing the BERT weights, to compare with {\CATFr} since neither fine-tunes the BERT parameters.

\begin{table}[t]
\centering
\renewcommand{\arraystretch}{0.8}
\resizebox{\columnwidth}{!}{
\begin{tabular}{cccc}
\toprule
 &\textbf{Amazon}                         & \textbf{Yelp}          & \textbf{IMDB }                     \\ \midrule
BERT No-FT & 93.1 & 90.2 & 91.6\\
BERT FT & 94.0 & 91.7 & 92.3\\
Adapter & 94.3 & 93.5 & 90.5\\ \midrule
\CNNR & 91.1 & 92.7 & 93.2  \\ 
{\CCAFr}({\CNNR}) & 79.1 & 71.5 & 80.8 \\
{\KCCAFr}({\CNNR}) & 91.5 & 91.5 & 94.1 \\
{\CATFr}({\CNNR}) & 93.2 &  \textbf{96.5} & 96.2 \\ 
{\CATEE}({\CNNR}) & \textbf{94.0} & 96.2 & \textbf{97.0} \\\midrule
{\CNNS} & 94.7 & 95.2 & 96.6  \\ 
{\CCAFr}({\CNNS}) & 83.6 & 67.8 & 83.3 \\
{\KCCAFr}({\CNNS}) & 94.3 & 91.9 & 97.9 \\
{\CATFr}({\CNNS}) & 95.3 & 97.1 & 98.1\\ 
{\CATEE}({\CNNS}) & \textbf{95.7} & \textbf{97.2} & \textbf{98.3} \\\midrule
{\CNNNS} & 95.9 & 95.8 & 96.8 \\
{\CCAFr}({\CNNNS}) & 81.3 & 69.4 & 85.0\\
{\KCCAFr}({\CNNNS}) & 95.8 & 96.2 & 97.2\\
{\CATFr}({\CNNNS}) & 96.4 & \textbf{98.3} & 98.3 \\ 
{\CATEE}({\CNNNS}) & \textbf{96.8} & \textbf{98.3} & \textbf{98.4} \\ \bottomrule
\end{tabular}
}
\vspace{-0.7em}
\caption{Evaluation on very small datasets. {\CCAFr}($\cdot$) / {\KCCAFr}($\cdot$) / {\CATFr}($\cdot$) / {\CATEE}($\cdot$) refers to using a specific CNN variant as $f_2$. Best results for each CNN variant in boldface.}
\label{tab:small_datasets}
\end{table}

\vspace{0.5em}
\myparagraph{Results on Very Small Datasets}
\label{subsec:small_datasets}
On the 3 very small datasets, we present results averaged over 10 runs in Table~\ref{tab:small_datasets}. The key observations are summarized as follows: 

\noindent (i) {\CATFr} and {\CATEE} almost always beat the accuracy of the baselines (BERT FT, Adapter) showing their effectiveness in transferring knowledge from the general domain to the target domain. 

\noindent(ii) Both the {\CCAFr}, {\KCCAFr} (computationally expensive) get inferior performance than {\CATFr}. Similar trends for GenSen and InferSent in Appendix~\ref{app:complete_results}.

\noindent (iii) {\CATEE} performs better than {\CATFr}, but at an increased computational cost. The execution time for the latter is the time taken to train the text-CNN, extract BERT embeddings, concatenate them, and train a classifier on the combination. On an average run on the Amazon dataset, {\CATFr} requires about 125 s, reducing around $30\%$ of the 180 s for BERT FT. Additionally, {\CATFr} has small memory requirements as it can be computed on a CPU in contrast to BERT FT which requires, at minimum, a 12GB memory GPU. The total time for {\CATEE} is 195 s, which is less than a $9\%$ increase over FT. It also has a negligible $1.04\%$ increase in memory (the number of parameters increases from 109,483,778 to 110,630,332 due to the text-CNN).

\vspace{0.5em}
\myparagraph{Results on Small Datasets} 
\label{subsec:medium_datasets}
We use the best performing {\CNNNS} model and present the results in Table~\ref{tab:big_datasets}. Again, {\CATEE} achieves the best performance on all the datasets improving the performance of BERT FT and Adapter. 
{\CATFr} can achieve comparable test accuracy to BERT FT on all the tasks while being much more computationally efficient.
On an average run on the MR dataset, {\CATFr}(290 s) reduces the time of BERT FT (560 s) by about $50\%$, while {\CATEE} (610 s) only incurs an increase of about $9\%$ over BERT FT.  

\begin{table}[t]
\centering
\renewcommand{\arraystretch}{1}
\resizebox{\columnwidth}{!}{
\begin{tabular}{ccccc}
\toprule
 &\textbf{MR}                         & \textbf{MPQA}          & \textbf{SUBJ }         & \textbf{TREC }                      \\ \midrule
BERT No-FT        & 83.26  & 87.44 & 95.96   & 88.06   \\ 
BERT FT            &    86.22      &  90.47  &      96.95   &  96.40          \\
Adapter            &    85.55      &  90.40  &      97.40   &  96.55          \\
{\CNNNS} & 80.93  & 88.38  & 89.25  & 92.98  \\ 
{\CATFr}({\CNNNS})  & 85.60  & 90.06  & 95.92  & 96.64  \\ 
{\CATEE}({\CNNNS})            &    \textbf{87.15 }           &  \textbf{91.19 }             &      \textbf{97.60 }         &  \textbf{97.06 }             \\ \bottomrule
\end{tabular}
}
\vspace{-0.7em}
\caption{Performance of {\CATFr} and {\CATEE} using {\CNNNS} and BERT on small datasets. Best results in boldface.}
\label{tab:big_datasets}
\end{table}

\vspace{0.5em}
\myparagraph{Comparison with Adapter} {\CATFr} can outperform Adapter for very small datasets and perform comparably on small datasets having 2 advantages: 

\noindent (i) We do not need to open the BERT model and access its parameters to introduce intermediate layers and hence our method is modular applicable to multiple pre-trained models. 

\noindent  (ii) On very small datasets like Amazon, {\CATFr} introduces roughly only $1\%$ extra parameters as compared to the $3{-}4\%$ of Adapter thereby being more parameter efficient. However note that this increase in the number of parameters due to the text-CNN is a function of the vocabulary size of the dataset as it includes the word embeddings which are fed as input to the text-CNN. For a dataset having a larger vocabulary size like SUBJ \footnote{For SUBJ, the embeddings alone contribute $6,396,900$ additional parameters ($5.84\%$ of parameters of BERT-Base)}, Adapter might be more parameter efficient than {\CATFr}.

\vspace{0.5em}
\myparagraph{Effect of Dataset Size} We study the effect of size of data on the performance of our method by varying the training data of the MR dataset via random sub-sampling. From Figure~\ref{fig:plot}, we observe that {\CATEE} gets the best results across all training data sizes, significantly improving over BERT FT. {\CATFr} gets performance comparable to BERT FT on a wide range of data sizes, from 500 points on.

\begin{figure}[t]
    \centering
    \includegraphics[width=0.8\linewidth]{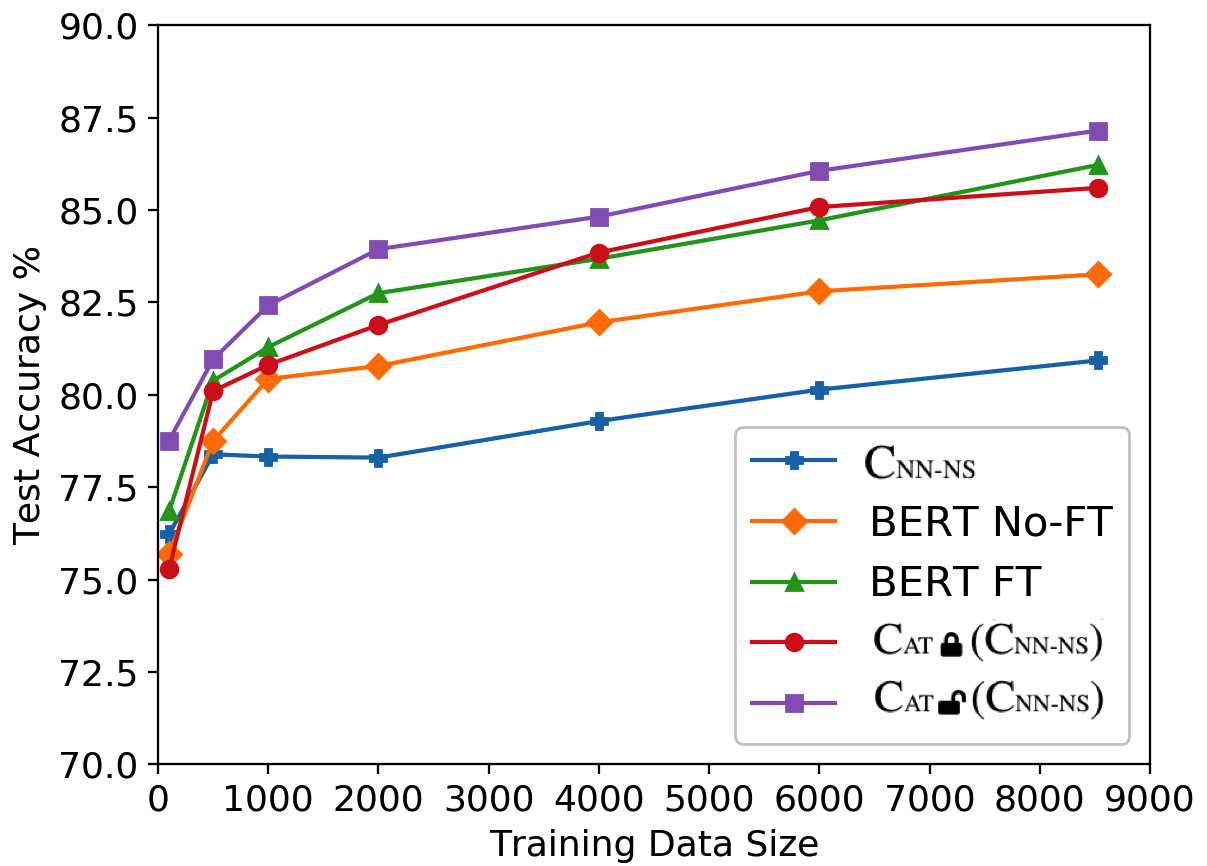}
    \vspace{-0.7em}
    \caption{Comparing test accuracy of {\CATFr} and {\CATEE} on MR dataset with varying training dataset size.}
    \label{fig:plot}
\end{figure}

\noindent We present qualitative analysis and complete results with error bounds in Appendix~\ref{app:results}. 

\section{Conclusion}
\label{sec:conclusion}
In this paper we have proposed a simple method for transferring a pre-trained sentence embedding model for text classification tasks. We empirically show that concatenating pre-trained and domain specific sentence embeddings, learned on the target dataset, with or without fine-tuning can improve the classification performance of pre-trained models like BERT on small datasets. We have also provided theoretical analysis identifying the conditions when this method is successful and to explain the experimental results.

\section*{Acknowledgements}
This work was supported in part by FA9550-18-1-0166. The authors would also like to acknowledge the support provided by the University of Wisconsin-Madison Office of the Vice Chancellor for Research and Graduate Education with funding from the Wisconsin Alumni Research Foundation.

\bibliographystyle{acl_natbib}
\bibliography{aacl-ijcnlp2020}

\begin{thebibliography}{31}
\expandafter\ifx\csname natexlab\endcsname\relax\def\natexlab#1{#1}\fi

\bibitem[{Arase and Tsujii(2019)}]{arase-tsujii-2019-transfer}
Yuki Arase and Jun{'}ichi Tsujii. 2019.
\newblock \href {https://doi.org/10.18653/v1/D19-1542} {Transfer fine-tuning: A
  {BERT} case study}.
\newblock In \emph{Proceedings of the 2019 Conference on Empirical Methods in
  Natural Language Processing and the 9th International Joint Conference on
  Natural Language Processing (EMNLP-IJCNLP)}, pages 5393--5404, Hong Kong,
  China. Association for Computational Linguistics.

\bibitem[{Cherry et~al.(2019)Cherry, Durrett, Foster, Haffari, Khadivi, Peng,
  Ren, and Swayamdipta}]{emnlp-2019-deep}
Colin Cherry, Greg Durrett, George Foster, Reza Haffari, Shahram Khadivi,
  Nanyun Peng, Xiang Ren, and Swabha Swayamdipta, editors. 2019.
\newblock \href {https://www.aclweb.org/anthology/D19-6100} {\emph{Proceedings
  of the 2nd Workshop on Deep Learning Approaches for Low-Resource NLP (DeepLo
  2019)}}. Association for Computational Linguistics, Hong Kong, China.

\bibitem[{Collobert et~al.(2011)Collobert, Weston, Bottou, Karlen, Kavukcuoglu,
  and Kuksa}]{10.5555/1953048.2078186}
Ronan Collobert, Jason Weston, L\'{e}on Bottou, Michael Karlen, Koray
  Kavukcuoglu, and Pavel Kuksa. 2011.
\newblock Natural language processing (almost) from scratch.
\newblock \emph{J. Mach. Learn. Res.}, 12(null):2493–2537.

\bibitem[{Conneau et~al.(2017)Conneau, Kiela, Schwenk, Barrault, and
  Bordes}]{conneau2017supervised}
Alexis Conneau, Douwe Kiela, Holger Schwenk, Lo{\"\i}c Barrault, and Antoine
  Bordes. 2017.
\newblock Supervised learning of universal sentence representations from
  natural language inference data.
\newblock In \emph{Proceedings of the 2017 Conference on Empirical Methods in
  Natural Language Processing}, pages 670--680.

\bibitem[{Devlin et~al.(2018)Devlin, Chang, Lee, and
  Toutanova}]{devlin2018bert}
Jacob Devlin, Ming-Wei Chang, Kenton Lee, and Kristina Toutanova. 2018.
\newblock Bert: Pre-training of deep bidirectional transformers for language
  understanding.
\newblock \emph{arXiv preprint arXiv:1810.04805}.

\bibitem[{Dietterich(2000)}]{10.5555/648054.743935}
Thomas~G. Dietterich. 2000.
\newblock Ensemble methods in machine learning.
\newblock In \emph{Proceedings of the First International Workshop on Multiple
  Classifier Systems}, MCS '00, page 1–15, Berlin, Heidelberg.
  Springer-Verlag.

\bibitem[{Dodge et~al.(2020)Dodge, Ilharco, Schwartz, Farhadi, Hajishirzi, and
  Smith}]{dodge2020finetuning}
Jesse Dodge, Gabriel Ilharco, Roy Schwartz, Ali Farhadi, Hannaneh Hajishirzi,
  and Noah Smith. 2020.
\newblock \href {http://arxiv.org/abs/2002.06305} {Fine-tuning pretrained
  language models: Weight initializations, data orders, and early stopping}.

\bibitem[{Garg et~al.(2019)Garg, Vu, and Moschitti}]{garg2019tanda}
Siddhant Garg, Thuy Vu, and Alessandro Moschitti. 2019.
\newblock \href {http://arxiv.org/abs/1911.04118} {Tanda: Transfer and adapt
  pre-trained transformer models for answer sentence selection}.

\bibitem[{Hotelling(1936)}]{hotelling1936relations}
H~Hotelling. 1936.
\newblock Relations between two sets of variates.
\newblock \emph{Biometrika}.

\bibitem[{Houlsby et~al.(2019)Houlsby, Giurgiu, Jastrzebski, Morrone,
  De~Laroussilhe, Gesmundo, Attariyan, and Gelly}]{houlsby2019parameter}
Neil Houlsby, Andrei Giurgiu, Stanislaw Jastrzebski, Bruna Morrone, Quentin
  De~Laroussilhe, Andrea Gesmundo, Mona Attariyan, and Sylvain Gelly. 2019.
\newblock Parameter-efficient transfer learning for nlp.
\newblock In \emph{International Conference on Machine Learning}, pages
  2790--2799.

\bibitem[{Howard and Ruder(2018)}]{howard2018universal}
Jeremy Howard and Sebastian Ruder. 2018.
\newblock Universal language model fine-tuning for text classification.
\newblock In \emph{Proceedings of the 56th Annual Meeting of the Association
  for Computational Linguistics (Volume 1: Long Papers)}, pages 328--339.

\bibitem[{Kim(2014)}]{kim2014convolutional}
Yoon Kim. 2014.
\newblock \href {http://aclweb.org/anthology/D/D14/D14-1181.pdf} {Convolutional
  neural networks for sentence classification}.
\newblock In \emph{Proceedings of the 2014 Conference on Empirical Methods in
  Natural Language Processing, {EMNLP} 2014, October 25-29, 2014, Doha, Qatar,
  {A} meeting of SIGDAT, a Special Interest Group of the {ACL}}, pages
  1746--1751.

\bibitem[{Lee et~al.(2020)Lee, Cho, and Kang}]{Lee2020Mixout}
Cheolhyoung Lee, Kyunghyun Cho, and Wanmo Kang. 2020.
\newblock \href {https://openreview.net/forum?id=HkgaETNtDB} {Mixout: Effective
  regularization to finetune large-scale pretrained language models}.
\newblock In \emph{International Conference on Learning Representations}.

\bibitem[{Lehmann et~al.(2015)Lehmann, Isele, Jakob, Jentzsch, Kontokostas,
  Mendes, Hellmann, Morsey, van Kleef, Auer, and Bizer}]{dbpedia-swj}
Jens Lehmann, Robert Isele, Max Jakob, Anja Jentzsch, Dimitris Kontokostas,
  Pablo~N. Mendes, Sebastian Hellmann, Mohamed Morsey, Patrick van Kleef,
  S{\"o}ren Auer, and Christian Bizer. 2015.
\newblock \href {http://jens-lehmann.org/files/2015/swj_dbpedia.pdf} {{DBpedia}
  - a large-scale, multilingual knowledge base extracted from wikipedia}.
\newblock \emph{Semantic Web Journal}, 6(2).

\bibitem[{Li and Roth(2002)}]{Li:2002:LQC:1072228.1072378}
Xin Li and Dan Roth. 2002.
\newblock \href {https://doi.org/10.3115/1072228.1072378} {Learning question
  classifiers}.
\newblock In \emph{Proceedings of the 19th International Conference on
  Computational Linguistics - Volume 1}, COLING '02, pages 1--7, Stroudsburg,
  PA, USA. Association for Computational Linguistics.

\bibitem[{Pang and Lee(2004)}]{Pang+Lee:04a}
Bo~Pang and Lillian Lee. 2004.
\newblock A sentimental education: Sentiment analysis using subjectivity.
\newblock In \emph{Proceedings of ACL}, pages 271--278.

\bibitem[{Pang and Lee(2005)}]{Pang+Lee:05a}
Bo~Pang and Lillian Lee. 2005.
\newblock Seeing stars: Exploiting class relationships for sentiment
  categorization with respect to rating scales.
\newblock In \emph{Proceedings of ACL}, pages 115--124.

\bibitem[{Pennington et~al.(2014)Pennington, Socher, and
  Manning}]{pennington-etal-2014-glove}
Jeffrey Pennington, Richard Socher, and Christopher Manning. 2014.
\newblock \href {https://doi.org/10.3115/v1/D14-1162} {{G}lo{V}e: Global
  vectors for word representation}.
\newblock In \emph{Proceedings of the 2014 Conference on Empirical Methods in
  Natural Language Processing ({EMNLP})}, pages 1532--1543, Doha, Qatar.
  Association for Computational Linguistics.

\bibitem[{Peters et~al.(2019)Peters, Ruder, and Smith}]{peters2019tune}
Matthew Peters, Sebastian Ruder, and Noah~A Smith. 2019.
\newblock To tune or not to tune? adapting pretrained representations to
  diverse tasks.
\newblock \emph{arXiv preprint arXiv:1903.05987}.

\bibitem[{Peters et~al.(2018)Peters, Neumann, Iyyer, Gardner, Clark, Lee, and
  Zettlemoyer}]{peters2018deep}
Matthew~E Peters, Mark Neumann, Mohit Iyyer, Matt Gardner, Christopher Clark,
  Kenton Lee, and Luke Zettlemoyer. 2018.
\newblock Deep contextualized word representations.
\newblock In \emph{Proceedings of NAACL-HLT}, pages 2227--2237.

\bibitem[{Phang et~al.(2018)Phang, Févry, and Bowman}]{phang2018sentence}
Jason Phang, Thibault Févry, and Samuel~R. Bowman. 2018.
\newblock \href {http://arxiv.org/abs/1811.01088} {Sentence encoders on stilts:
  Supplementary training on intermediate labeled-data tasks}.

\bibitem[{Radford et~al.(2018)Radford, Narasimhan, Salimans, and
  Sutskever}]{radford2018improving}
Alec Radford, Karthik Narasimhan, Tim Salimans, and Ilya Sutskever. 2018.
\newblock Improving language understanding by generative pre-training.
\newblock \emph{URL https://s3-us-west-2. amazonaws.
  com/openai-assets/research-covers/languageunsupervised/language understanding
  paper. pdf}.

\bibitem[{Sarma et~al.(2018)Sarma, Liang, and Sethares}]{sarma2018domain}
Prathusha~K Sarma, Yingyu Liang, and Bill Sethares. 2018.
\newblock Domain adapted word embeddings for improved sentiment classification.
\newblock In \emph{Proceedings of the 56th Annual Meeting of the Association
  for Computational Linguistics (Volume 2: Short Papers)}, pages 37--42.

\bibitem[{Sch{\"o}lkopf et~al.(1998)Sch{\"o}lkopf, Smola, and
  M{\"u}ller}]{scholkopf1998nonlinear}
Bernhard Sch{\"o}lkopf, Alexander Smola, and Klaus-Robert M{\"u}ller. 1998.
\newblock Nonlinear component analysis as a kernel eigenvalue problem.
\newblock \emph{Neural computation}, 10(5):1299--1319.

\bibitem[{Subramanian et~al.(2018)Subramanian, Trischler, Bengio, and
  Pal}]{subramanian2018learning}
Sandeep Subramanian, Adam Trischler, Yoshua Bengio, and Christopher~J Pal.
  2018.
\newblock Learning general purpose distributed sentence representations via
  large scale multi-task learning.
\newblock \emph{arXiv preprint arXiv:1804.00079}.

\bibitem[{Sun et~al.(2019)Sun, Qiu, Xu, and
  Huang}]{DBLP:journals/corr/abs-1905-05583}
Chi Sun, Xipeng Qiu, Yige Xu, and Xuanjing Huang. 2019.
\newblock \href {http://arxiv.org/abs/1905.05583} {How to fine-tune {BERT} for
  text classification?}
\newblock \emph{CoRR}, abs/1905.05583.

\bibitem[{Wang et~al.(2018)Wang, Singh, Michael, Hill, Levy, and
  Bowman}]{wang-etal-2018-glue}
Alex Wang, Amanpreet Singh, Julian Michael, Felix Hill, Omer Levy, and Samuel
  Bowman. 2018.
\newblock \href {https://doi.org/10.18653/v1/W18-5446} {{GLUE}: A multi-task
  benchmark and analysis platform for natural language understanding}.
\newblock In \emph{Proceedings of the 2018 {EMNLP} Workshop {B}lackbox{NLP}},
  Brussels, Belgium. Association for Computational Linguistics.

\bibitem[{Wang et~al.(2008)Wang, Zhang, Ma, and Ru}]{10.1145/1367497.1367560}
Canhui Wang, Min Zhang, Shaoping Ma, and Liyun Ru. 2008.
\newblock \href {https://doi.org/10.1145/1367497.1367560} {Automatic online
  news issue construction in web environment}.
\newblock In \emph{Proceedings of the 17th WWW}, page 457–466, New York, NY,
  USA. Association for Computing Machinery.

\bibitem[{Wang et~al.(2019)Wang, Su, Wang, Ji, and Ding}]{wang2019to}
Ran Wang, Haibo Su, Chunye Wang, Kailin Ji, and Jupeng Ding. 2019.
\newblock To tune or not to tune? how about the best of both worlds?
\newblock \emph{ArXiv}.

\bibitem[{Wiebe and Wilson(2005)}]{Wiebe2005}
Janyce Wiebe and Theresa Wilson. 2005.
\newblock \href {https://doi.org/10.1007/s10579-005-7880-9} {Annotating
  expressions of opinions and emotions in language}.
\newblock \emph{Language Resources and Evaluation}, 39(2):165--210.

\bibitem[{Xu et~al.(2019)Xu, Liu, Shu, and Philip}]{xu2019bert}
Hu~Xu, Bing Liu, Lei Shu, and S~Yu Philip. 2019.
\newblock Bert post-training for review reading comprehension and aspect-based
  sentiment analysis.
\newblock In \emph{Proceedings of the 2019 Conference of the North American
  Chapter of the Association for Computational Linguistics: Human Language
  Technologies, Volume 1 (Long and Short Papers)}, pages 2324--2335.

\end{thebibliography}

\appendix
\newtheorem{thm2}{Theorem}

\noindent{\Large \textbf{Appendix}}
\section{Theorems: Additional Explanation}

\subsection{Concatenation}
\label{app:proofs-1}
\begin{thm2}
If the loss function $L$ is $\lambda$-Lipschitz for the first parameter, and $P$ has full column rank, then there exists a linear classifier $\bar{f}$ over $\bar{v}_s$ such that 
$ L(\bar{f}) \le L(f^*) + \lambda \sigma \| (P^\dagger)^\top w^* \|_2  $
where $P^\dagger$ is the pseudo-inverse of $P$.
\end{thm2}
\paragraph{Justification of Assumptions} The assumption of Lipschitz-ness of the loss means that the loss changes smoothly with the prediction, which is a standard assumption in machine learning. The assumption on $P$ having full column rank means that $v_{1s}, v_{2s}$ contain the information of $v^*_s$ and ensures that $P^\dagger$ exists.%
\footnote{One can still do analysis dropping the full-rank assumption, but it will become more involved and non-intuitive}

\myparagraph{Explanation}
For intuition about the term $\| (P^\dagger)^\top w^* \|_2$, consider the following simple example. Suppose $v^*_s$ has $4$ dimensions, and $w^* = [1, 1, 0, 0]^\top$, i.e., only the first two dimensions are useful for classification. Suppose $P_1 = \text{diag}(c, 0, 1, 0)$ is a diagonal matrix, so that $v_{1s}$ captures the first dimension with scaling factor $c>0$ and the third dimension with factor $1$, and $P_2 = \text{diag}(0, c, 0, 1)$ so that $v_{2s}$ captures the other two dimensions. Hence we have $(P^\dagger)^\top w^* = [1/c, 1/c, 0, 0]^\top$, and thus
\begin{align}
     L(\bar{f}) \le L(f^*) + \sqrt{2}\lambda \frac{\sigma}{c} \notag    
\end{align}
Thus the quality of the classifier is determined by the noise-signal ratio $\sigma/c$. If $c$ is small, meaning that $v_{1s}$ and $v_{2s}$ mostly contain nuisance, then the loss is large. If $c$ is large, meaning that $v_{1s}$ and $v_{2s}$ mostly capture the information along with some nuisance while the noise is relatively small, then the loss is close to that of $f^*$. Note that $\bar{f}$ can be much better than any classifier using only $v_{1s}$ or $v_{2s}$ that has only part of the features determining the class labels.

\subsection{CCA}
\label{app:proofs-2}
\begin{thm2}
Let $\bar{v}_s$ denote the embedding for sentence $s$ obtained by concatenation, and $\tilde{v}_s$ denote that obtained by CCA.
There exists a setting of the data and $w^*, P, \epsilon$ such that there exists a linear classifier $\bar{f}$ on $\bar{v}_s$ with the same loss as $f^*$, while CCA achieves the maximum correlation but any classifier on $\tilde{v}_s$ is at best random guessing.
\end{thm2}

\myparagraph{Empirical Verification}
One important insight from Theorem~\ref{thm:cca} is that when the two sets of embeddings have special information that is not shared with each other but is important for classification, then CCA will eliminate such information and have bad prediction performance. Let $r_{2s} = v_{2s} - \Phi_2^\top \Phi_2 v_{2s}$ be the residue vector for the projection $\Phi_2$ learned by CCA for the special domain, and similarly define $r_{1s}$.  
Then the analysis suggests that the residues $r_{1s}$ and $r_{2s}$ contain information important for prediction. We conduct experiments for BERT+CNN-non-static on Amazon reviews, and find that a classifier on the concatenation of $r_{1s}$ and $r_{2s}$ has accuracy $96.4\%$. This is much better than $81.3\%$ on the combined embeddings via CCA. 
These observations provide positive support for our analysis.

\section{Experiment Details}
\label{app:experiments}

\subsection{Datasets}
\label{app:datasets}
In addition to Table~\ref{tab:datasets}, here we provide details on the tasks of the datasets and links to download them for reproducibility of results.
\begin{itemize}
    \item \textit{Amazon}: A sentiment classification dataset on Amazon product reviews where reviews are classified as `Positive' or `Negative'.
    \footnote{\url{https://archive.ics.uci.edu/ml/datasets/Sentiment+Labelled+Sentences}}. 
    \item \vspace{-5pt} \textit{IMDB}: A sentiment classification dataset of movie reviews on IMDB where reviews are classified as `Positive' or `Negative' \footnotemark[3].  
    \item \vspace{-5pt} \textit{Yelp}: A sentiment classification dataset of restaurant reviews from Yelp where reviews are classified as `Positive' or `Negative' \footnotemark[3]. 
    \item \vspace{-5pt} \textit{MR}: A sentiment classification dataset of movie reviews based on sentiment polarity and subjective rating \cite{Pang+Lee:05a}\footnote{\url{https://www.cs.cornell.edu/people/pabo/movie-review-data/}}.
    \item \vspace{-5pt} \textit{MPQA}: An unbalanced polarity classification dataset ($~70\%$ negative examples) for opinion polarity detection~\cite{Wiebe2005}\footnote{\url{http://mpqa.cs.pitt.edu/}}.
    \item \vspace{-5pt} \textit{TREC}: A question type classification dataset with 6 classes for questions about a person, location, numeric information, etc. \cite{Li:2002:LQC:1072228.1072378}\footnote{\url{http://cogcomp.org/Data/QA/QC/}}. 
    \item \textit{SUBJ}: A dataset for classifying a sentence as having subjective or objective opinions \cite{Pang+Lee:04a}.
\end{itemize}
The Amazon, Yelp and IMDB review datasets have previously been used for research on few-sample learning by \citet{sarma2018domain} and capture sentiment information from target domains very different from the general text corpora of the pre-trained models.

\subsection{Embedding Models}
\label{app:models}
\subsubsection{Domain Specific $f_2$}
We use the text-CNN model~\cite{kim2014convolutional} for domain specific embeddings $f_2$ the details of which are provided below.

\myparagraph{Text-CNN}
The model restricts the maximum sequence length of the input sentence to $128$ tokens, and uses convolutional filter windows of sizes $3$, $4$, $5$ with $100$ feature maps for each size. A max-overtime pooling operation~\cite{10.5555/1953048.2078186} is used over the feature maps to get a $384$ dimensional sentence embeddings ($128$ dimensions corresponding to each filter size). 
We train the model using the Cross Entropy loss with an $\ell_2$ norm penalty on the classifier weights similar to \citet{kim2014convolutional}. We use a dropout rate of $0.5$ while training. 
For each dataset, we create a vocabulary specific to the dataset which includes any token present in the train/dev/test split.
The input word embeddings can be chosen in the following three ways:
\begin{itemize}
    \item \textbf{{\CNNR} :} Randomly initialized 300-dimensional word embeddings trained together with the text-CNN.
    \item \textbf{{\CNNS} :} Initialised with GloVe~\cite{pennington-etal-2014-glove} pre-trained word embeddings and made static during training the text-CNN.
    \item \textbf{{\CNNNS} :} Initialised with GloVe~\cite{pennington-etal-2014-glove} pre-trained word embeddings and made trainable during training the text-CNN.
\end{itemize}
\noindent For very small datasets we additionally compare with sentence embeddings obtained using the Bag of Words approach.

\subsubsection{Pre-Trained $f_1$}
We use the following three models for pre-trained embeddings $f_1$:

\myparagraph{BERT}
We use the BERT\footnote{https://github.com/google-research/bert}-base uncased model with WordPiece tokenizer having 12 transformer layers. We obtain $768$ dimensional sentence embeddings corresponding to the [CLS] token from the final layer. We perform fine-tuning for 20 epochs with early stopping by choosing the best performing model on the validation data. The additional fine-tuning epochs (20 compared to the typical 3) allows for a better performance of the fine-tuning baseline since we use early stopping. 

\myparagraph{InferSent}
We use the pre-trained InferSent model~\cite{conneau2017supervised} to obtain $4096$ dimensional sentence embeddings using the implementation provided in the SentEval\footnote{https://github.com/facebookresearch/SentEval} repository. We use InferSent v1 for all our experiments.

\myparagraph{GenSen}
We use the pre-trained GenSen model~\cite{subramanian2018learning} implemented in the SentEval repository to obtain $4096$ dimensional sentence embeddings.

\subsection{Training Details}
\label{app:training_details}
We train domain specific embeddings on the training data and extract the embeddings.
We combine these with the embeddings from the pre-trained models and train a regularized logistic regression classifier on top.
This classifier is learned on the training data, while using the dev data for hyper-parameter tuning the regularizer penalty on the weights. 
The classifier can be trained either by freezing the weights of the embedding models or training the whole network end-to-end.
The performance is tested on the test set.
We use test accuracy as the performance metric and report all results averaged over $10$ experiments unless mentioned otherwise.
The experiments are performed on an NVIDIA Titan Xp 12 GB GPU.

\subsubsection{Hyperparameters}
\label{app:hyperparams}
We use an Adam optimizer with a learning rate of $2e^{-5}$ as per the standard fine-tuning practice. 
For {\CCAFr}, we used a regularized CCA implementation and tune the regularization parameter via grid search in [$0.00001$, $10$] in multiplicative steps of $10$ over the validation data. 
For {\KCCAFr}, we use a Gaussian kernel with a regularized KCCA implementation where the Gaussian sigma and the regularization parameter are tuned via grid search in $[0.05, 10]$ and $[0.00001, 10]$ respectively in multiplicative steps of 10 over the validation data.
For {\CATFr} and {\CATEE},  the weighting parameter $\alpha$ is tuned via grid search in the range [$0.002$, $500$] in multiplicative steps of 10 over the validation data.

\section{Additional Results}
\label{app:results} 

\subsection{Qualitative Analysis}
\label{app:qual_results}
We present some qualitative examples from the Amazon, IMDB and Yelp datasets on which BERT and {\CNNNS} are unable to provide the correct class predictions, while {\CATFr} or {\KCCAFr} can successfully provide the correct class predictions in Table \ref{tab:qual-anal}. We observe that these are either short sentences or ones where the content is tied to the specific reviewing context as well as the involved structure to be parsed with general knowledge. Such input sentences thus require combining both the general semantics of BERT and the domain specific semantics of {\CNNNS} to predict the correct class labels.

\begin{table}[t]
\begin{subtable}{\columnwidth}
\centering
   \resizebox{\columnwidth}{!}{
   \begin{tabular}{l}
   \hline
    \hline
    \textbf{Correctly classified by {\KCCAFr}} \\
    \hline
    However-the ringtones are not the best, and neither are the \\ games. \\ \hline
    This is cool because most cases are just open there allowi-\\ ng the screen to get all scratched up.\\ 
    \hline
    \hline
    \textbf{Correctly classified by {\CATFr}} \\
    \hline
    TNot nearly as good looking as the amazon picture makes \\ it look .\\ \hline
    Magical Help . \\\hline
   \end{tabular}}
   \caption{Amazon}
\end{subtable}

\bigskip
\begin{subtable}{\columnwidth}
\centering
   \resizebox{\columnwidth}{!}{
   \begin{tabular}{l}
    \hline
    \hline
    \textbf{Correctly classified by {\KCCAFr}} \\
    \hline
    I would have casted her in that role after ready the script . \\\hline
    Predictable ,  but not a bad watch . \\
    \hline
    \hline
    \textbf{Correctly classified by {\CATFr}} \\
    \hline
    I would have casted her in that role after ready the script . \\\hline
    Predictable ,  but not a bad watch . \\ \hline
   \end{tabular}}
   \caption{IMDB}
\end{subtable}

\bigskip
\begin{subtable}{\columnwidth}
\centering
   \resizebox{\columnwidth}{!}{
   \begin{tabular}{l}
    \hline
    \hline
    \textbf{Correctly classified by {\KCCAFr}} \\
    \hline
    The lighting is just dark enough to set the mood . \\\hline
    I went to Bachi Burger on a friend's recommend-\\ation and was not disappointed . \\ \hline
    dont go here . \\ \hline
    I found this place by accident and I could not be happier . \\
    \hline
    \hline
    \textbf{Correctly classified by {\CATFr}} \\
    \hline
    The lighting is just dark enough to set the mood . \\\hline
    I went to Bachi Burger on a friend's recommend-\\ation and was not disappointed . \\ \hline
    dont go here . \\ \hline
    I found this place by accident and I could not be happier . \\ \hline
   \end{tabular}}
   \caption{Yelp}
\end{subtable}
\vspace{-0.7em}
\caption{Sentences from Amazon, IMDB, Yelp datasets where {\KCCAFr} and {\CATFr} of BERT and {\CNNNS} embeddings succeeds while they individually give wrong predictions.} \label{tab:qual-anal}
\end{table}
 
\subsection{Complete Results with Error Bounds}
\label{app:complete_results}
We present a comprehensive set of results along with error bounds on very small datasets (Amazon, IMDB and Yelp reviews) in Table~\ref{tab:small_datasets}, where we evaluate three popularly used pre-trained sentence embedding models, namely BERT, GenSen and InferSent. We present the error bounds on the results for small datasets in Table~\ref{tab:big_datasets}. For small datasets, we additionally present results from using {\CCAFr} (We omit {\KCCAFr} here due to high computational memory requirements).

\begin{table*}[ht]
\centering
\resizebox{2\columnwidth}{!}{
\begin{tabular}{cccc|c|c|c|c|}
\cline{5-8}
&       &      &        & \textbf{BOW }         & \textbf{{\CNNR}}      & \textbf{{\CNNS}}    & \textbf{{\CNNNS}} \\ \hline
\multicolumn{1}{|c|}{\multirow{11}{*}{\textbf{Amazon} }} & \multicolumn{3}{c|}{\textbf{Default}}                                                                                  &        79.20 $\pm$ 2.31      & 91.10 $\pm$ 1.64   & 94.70 $\pm$ 0.64   & 95.90 $\pm$ 0.70    \\ \cline{2-8} 
\multicolumn{1}{|c|}{}                         & \multicolumn{1}{c|}{\multirow{4}{*}{\textbf{BERT}}}      & \multicolumn{1}{c|}{\multirow{3}{*}{94.00 $\pm$ 0.02}} & {\CATEE} & - & 94.05 $\pm$ 0.23  & 95.70 $\pm$ 0.50  & \emph{\textbf{96.75 $\pm$ 0.76}} \\ \cline{4-8} 
\multicolumn{1}{|c|}{}                         & \multicolumn{1}{c|}{}                           & \multicolumn{1}{c|}{}                              & {\CATFr} & 89.59 $\pm$ 1.22 & 93.20 $\pm$ 0.98  & 95.30 $\pm$ 0.46  & 96.40 $\pm$ 1.11   \\ \cline{4-8} 
\multicolumn{1}{|c|}{}                         & \multicolumn{1}{c|}{}                           & \multicolumn{1}{c|}{}                              & {\KCCAFr}   & 89.12 $\pm$ 0.47 & 91.50 $\pm$ 1.63  & 94.30 $\pm$ 0.46  & 95.80 $\pm$ 0.40   \\ \cline{4-8}
\multicolumn{1}{|c|}{}                         & \multicolumn{1}{c|}{}                           & \multicolumn{1}{c|}{}                              & {\CCAFr}    & 50.91 $\pm$ 1.12 & 79.10 $\pm$ 2.51  & 83.60 $\pm$ 1.69  & 81.30 $\pm$ 3.16   \\ \cline{2-8} 
\multicolumn{1}{|c|}{}                         & \multicolumn{1}{c|}{\multirow{3}{*}{\textbf{GenSen}}}    & \multicolumn{1}{c|}{\multirow{3}{*}{82.55 $\pm$ 0.82}} & {\CATFr} & 82.82 $\pm$ 0.97 & 92.80 $\pm$ 1.25  & 94.10 $\pm$ 0.70  & 95.00 $\pm$ 1.0    \\ \cline{4-8} 
\multicolumn{1}{|c|}{}                         & \multicolumn{1}{c|}{}                           & \multicolumn{1}{c|}{}                              & {\KCCAFr}   & 79.21 $\pm$ 2.28 & 91.30 $\pm$ 1.42  & 94.80 $\pm$ 0.75  & \textbf{ 95.90 $\pm$ 0.30}   \\ \cline{4-8} 
\multicolumn{1}{|c|}{}                         & \multicolumn{1}{c|}{}                           & \multicolumn{1}{c|}{}                              & {\CCAFr}    & 52.80 $\pm$ 0.74  & 80.60 $\pm$ 4.87 & 83.00 $\pm$ 2.45  & 84.95 $\pm$ 1.45   \\ \cline{2-8} 
\multicolumn{1}{|c|}{}                         & \multicolumn{1}{c|}{\multirow{3}{*}{\textbf{InferSent}}} & \multicolumn{1}{c|}{\multirow{3}{*}{85.29 $\pm$ 1.61}} & {\CATFr} & 51.89 $\pm$ 0.62 & 90.30 $\pm$ 1.48 & 94.70 $\pm$ 1.10 & 95.90 $\pm$ 0.70  \\ \cline{4-8} 
\multicolumn{1}{|c|}{}                         & \multicolumn{1}{c|}{}                           & \multicolumn{1}{c|}{}                              & {\KCCAFr}   & 52.29 $\pm$ 0.74 & 91.70 $\pm$ 1.49 & 95.00 $\pm$ 0.00 & \textbf{ 96.00 $\pm$ 0.00 } \\ \cline{4-8} 
\multicolumn{1}{|c|}{}                         & \multicolumn{1}{c|}{}                           & \multicolumn{1}{c|}{}                              & {\CCAFr}    & 53.10 $\pm$ 0.82  & 61.10 $\pm$ 3.47 & 65.50 $\pm$ 3.69 & 71.40 $\pm$ 3.04   \\ \hline
\multicolumn{1}{|c|}{\multirow{11}{*}{\textbf{Yelp}}}   & \multicolumn{3}{c|}{\textbf{Default}}                                                                                  &         $81.3 \pm 2.72$     & 92.71$\pm$ 0.46  & 95.25 $\pm$ 0.39 & 95.83 $\pm$ 0.14  \\ \cline{2-8} 
\multicolumn{1}{|c|}{}                         & \multicolumn{1}{c|}{\multirow{4}{*}{\textbf{BERT}}}      & \multicolumn{1}{c|}{\multirow{3}{*}{91.67 $\pm$ 0.00}} & {\CATEE} & - & 96.23 $\pm$ 1.04  & 97.23 $\pm$ 0.70  &  \emph{\textbf{98.34 $\pm$ 0.62}}  \\ \cline{4-8} 
\multicolumn{1}{|c|}{}                         & \multicolumn{1}{c|}{}                           & \multicolumn{1}{c|}{}                         &      {\CATFr} & 89.03 $\pm$ 0.70 & 96.50 $\pm$ 1.33  & 97.10 $\pm$ 0.70  & 98.30 $\pm$ 0.78  \\ \cline{4-8} 
\multicolumn{1}{|c|}{}                         & \multicolumn{1}{c|}{}                           & \multicolumn{1}{c|}{}                              & {\KCCAFr}   & 88.51 $\pm$ 1.22 & 91.54 $\pm$ 4.63 & 91.91 $\pm$1.13  & 96.2 $\pm$ 0.87   \\ \cline{4-8} 
\multicolumn{1}{|c|}{}                         & \multicolumn{1}{c|}{}                           & \multicolumn{1}{c|}{}                              & {\CCAFr}    & 50.27 $\pm$ 1.33 & 71.53 $\pm$ 2.46 & 67.83 $\pm$ 3.07 & 69.4 $\pm$ 3.35   \\ \cline{2-8} 
\multicolumn{1}{|c|}{}                         & \multicolumn{1}{c|}{\multirow{3}{*}{\textbf{GenSen}}}    & \multicolumn{1}{c|}{\multirow{3}{*}{86.75 $\pm$ 0.79}} & {\CATFr} & 85.94 $\pm$ 1.04 & 94.24 $\pm$ 0.53 & 95.77 $\pm$ 0.36 & \textbf{ 96.03 $\pm$ 0.23 } \\ \cline{4-8} 
\multicolumn{1}{|c|}{}                         & \multicolumn{1}{c|}{}                           & \multicolumn{1}{c|}{}                              & {\KCCAFr}   & 83.35 $\pm$ 1.79 & 92.58 $\pm$ 0.31 & 95.41 $\pm$ 0.45 & 95.06 $\pm$ 0.56  \\ \cline{4-8} 
\multicolumn{1}{|c|}{}                         & \multicolumn{1}{c|}{}                           & \multicolumn{1}{c|}{}                              & {\CCAFr}    & 57.14 $\pm$ 0.84 & 84.27 $\pm$ 1.68 & 86.94 $\pm$ 1.62 & 87.27$\pm$ 1.81   \\ \cline{2-8} 
\multicolumn{1}{|c|}{}                         & \multicolumn{1}{c|}{\multirow{3}{*}{\textbf{InferSent}}} & \multicolumn{1}{c|}{\multirow{3}{*}{85.7 $\pm$ 1.12}}  & {\CATFr} & 50.83 $\pm$ 0.42 & 91.94 $\pm$ 0.46  & 96.10 $\pm$ 1.30 & \textbf{ 97.00 $\pm$ 0.77 } \\ \cline{4-8} 
\multicolumn{1}{|c|}{}                         & \multicolumn{1}{c|}{}                           & \multicolumn{1}{c|}{}                              & {\KCCAFr}   & 50.80 $\pm$ 0.65  & 91.13 $\pm$ 1.63  & 95.45 $\pm$ 0.23  & 95.57 $\pm$ 0.55   \\ \cline{4-8} 
\multicolumn{1}{|c|}{}                         & \multicolumn{1}{c|}{}                           & \multicolumn{1}{c|}{}                              & {\CCAFr}    & 55.91 $\pm$ 1.23 & 60.80 $\pm$ 2.22 & 54.70 $\pm$ 1.34  & 59.50 $\pm$ 1.85   \\ \hline
\multicolumn{1}{|c|}{\multirow{11}{*}{\textbf{IMDB}}}   & \multicolumn{3}{c|}{\textbf{Default}}                                                                                  &          $89.30 \pm 1.00$    & 93.25 $\pm$ 0.38 & 96.62 $\pm$ 0.46 & 96.76 $\pm$ 0.26  \\ \cline{2-8} 
\multicolumn{1}{|c|}{}                         & \multicolumn{1}{c|}{\multirow{4}{*}{\textbf{BERT}}}      & \multicolumn{1}{c|}{\multirow{3}{*}{92.33 $\pm$ 0.00}} & {\CATEE} & -  & 97.07 $\pm$ 0.95  &  98.31 $\pm$ 0.83   & \emph{\textbf{98.42 $\pm$ 0.78 } }  \\ \cline{4-8} 
\multicolumn{1}{|c|}{}                         & \multicolumn{1}{c|}{}                           & \multicolumn{1}{c|}{}                         &  {\CATFr} & 89.27 $\pm$ 0.97 & 96.20 $\pm$ 2.18  & 98.10 $\pm$ 0.94  & 98.30 $\pm$ 1.35  \\ \cline{4-8} 
\multicolumn{1}{|c|}{}                         & \multicolumn{1}{c|}{}                           & \multicolumn{1}{c|}{}                              & {\KCCAFr}   & 88.29 $\pm$ 0.65 & 94.10 $\pm$ 1.87  & 97.90 $\pm$ 0.30  & 97.20 $\pm$ 0.40  \\ \cline{4-8} 
\multicolumn{1}{|c|}{}                         & \multicolumn{1}{c|}{}                           & \multicolumn{1}{c|}{}                              & {\CCAFr}    & 51.03 $\pm$ 1.20 & 80.80 $\pm$ 2.75  & 83.30 $\pm$ 4.47  & 84.97 $\pm$ 1.44   \\ \cline{2-8} 
\multicolumn{1}{|c|}{}                         & \multicolumn{1}{c|}{\multirow{3}{*}{\textbf{GenSen}}}    & \multicolumn{1}{c|}{\multirow{3}{*}{86.41 $\pm$ 0.66}} & {\CATFr} & 86.86 $\pm$ 0.62 & 95.63 $\pm$ 0.47 & 97.22 $\pm$ 0.27 & \textbf{ 97.42 $\pm$ 0.31 } \\ \cline{4-8} 
\multicolumn{1}{|c|}{}                         & \multicolumn{1}{c|}{}                           & \multicolumn{1}{c|}{}                              & {\KCCAFr}   & 84.72 $\pm$ 0.93 & 93.23 $\pm$ 0.38 & 96.19 $\pm$ 0.21 & 96.60 $\pm$ 0.37  \\ \cline{4-8} 
\multicolumn{1}{|c|}{}                         & \multicolumn{1}{c|}{}                           & \multicolumn{1}{c|}{}                              & {\CCAFr}    & 51.48 $\pm$ 1.02 & 86.28 $\pm$ 1.76 & 87.30 $\pm$ 2.12 & 87.47 $\pm$ 2.17  \\ \cline{2-8} 
\multicolumn{1}{|c|}{}                         & \multicolumn{1}{c|}{\multirow{3}{*}{\textbf{InferSent}}} & \multicolumn{1}{c|}{\multirow{3}{*}{84.3 $\pm$ 0.63}}  & {\CATFr} & 50.36 $\pm$ 0.62 & 92.30 $\pm$ 1.26 & 97.90 $\pm$ 1.37 & 97.10 $\pm$ 1.22  \\ \cline{4-8} 
\multicolumn{1}{|c|}{}                         & \multicolumn{1}{c|}{}                           & \multicolumn{1}{c|}{}                              & {\KCCAFr}   & 50.09 $\pm$ 0.68 & 92.40 $\pm$ 1.11 & 97.62 $\pm$0.48  & \textbf{ 98.20 $\pm$ 1.40}  \\ \cline{4-8} 
\multicolumn{1}{|c|}{}                         & \multicolumn{1}{c|}{}                           & \multicolumn{1}{c|}{}                              & {\CCAFr}    & 52.56 $\pm$ 1.15 & 54.50 $\pm$ 4.92  & 54.20 $\pm$ 5.15 & 61.00 $\pm$ 4.64  \\ \hline
\end{tabular}}
\caption{Test accuracy ( $\pm$ std dev)  for Amazon, Yelp and IMDB review datasets. Default values are performance of the domain specific models. Default values for BERT, Gensen and InferSent correspond to fine-tuning them. Best results for each pre-trained model are highlighted in boldface.}
\label{tab:small_datasets_full}
\end{table*}

\begin{table*}[h]
\centering
\begin{tabular}{ccccc}
\toprule
 &\textbf{MR}                         & \textbf{MPQA}          & \textbf{SUBJ }         & \textbf{TREC }                      \\ \midrule
\textbf{BERT No-FT}        & 83.26 $\pm$  0.67 & 87.44 $\pm$ 1.37 & 95.96 $\pm$ 0.27  & 88.06 $\pm$ 1.90  \\ 
\textbf{BERT FT}            &    86.22 $\pm$  0.85       &  90.47 $\pm$ 1.04         &      96.95  $\pm$  0.14    &  96.40   $\pm$   0.67       \\ 
\textbf{{\CNNNS}} & 80.93 $\pm$ 0.16 & 88.38 $\pm$ 0.28 & 89.25 $\pm$ 0.08 & 92.98 $\pm$ 0.89 \\
\textbf{{\CCAFr}({\CNNNS})}  & 85.41 $\pm$ 1.18 & 77.22 $\pm$ 1.82 & 94.55 $\pm$ 0.44 & 84.28 $\pm$ 2.96 \\
\hline
\textbf{{\CATFr}({\CNNNS})}  & 85.60 $\pm$ 0.95 & 90.06 $\pm$ 0.48 & 95.92$\pm$ 0.26  & 96.64 $\pm$ 1.07 \\ 
\textbf{{\CATEE}({\CNNNS})}            &    \textbf{87.15 $\pm$ 0.70}           &  \textbf{91.19 $\pm$ 0.84}             &      \textbf{97.60 $\pm$ 0.23 }         &  \textbf{97.06 $\pm$ 0.48}             \\ \bottomrule
\end{tabular}
\caption{Test accuracy ($\pm$ std dev)  for MR, MPQA, SUBJ and TREC datasets. Best results on the datasets are highlighted in boldface. The domain specific embedding model used is CNN-non-static, and the pre-trained model used is BERT.}
\label{tab:big_datasets_full}
\end{table*}

\end{document}